\newcommand{\bm}[1]{\boldsymbol{#1}}
\renewcommand{\d}{\mathrm{d}}
\renewcommand{\P}{\mathbb{P}}
\newcommand{\1}{\mathbb{I}}
\newcommand{\Y}{\bm{Y}}
\newcommand{\bN}{\mathbb{N}}
\newcommand{\wt}[1]{\widetilde{#1}}
\newcommand{\iid}{\stackrel{iid}{\sim}}
\newcommand{\wh}[1]{\smash{\widehat{#1}}}
\def\C {\,|\:}
\def\C {\,|\:}
\def\a{\bm{b}}
\def\mF{\mathcal{F}}
\def\B{\bm{B}}
\def\b{\bm{\beta}}
\def\Y{\bm{Y}}
\def\x{\bm{x}}
\def\W{\bm{W}}
\def\bg{\bm{\gamma}}
\def\b{\bm{\beta}}
\renewcommand{\d}{\mathrm{d}\,}
\newcommand{\e}{\mathrm{e}}
\newcommand{\R}{\mathbb{R}}
\newcommand{\Ha}{\mathcal{H}^\alpha}
\newcommand{\N}{ \mathbb{N} }
\newcommand{\vnorm}[1]{\left|\left|#1\right|\right|}
\newcommand{\bx}{{\bf x}}
\newtheorem{lemma}{Lemma}[section]
\newtheorem{theorem}{Theorem}[section]
\newtheorem{remark}{Remark}[section]
\newtheorem{corollary}{Corollary}[section]
 \theoremstyle{assumption}
\begin{document}

\def\spacingset#1{\renewcommand{\baselinestretch}%
{#1}\small\normalsize} \spacingset{1}


  \title{\sf  Posterior Concentration for Sparse Deep Learning}

 \author{
       Nicholas Polson\footnote{
    Robert Law, Jr. Professor of Econometrics and Statistics at the {\sl \small Booth School of Business, University of Chicago}} \hspace{0.3cm} and \hspace{0.3cm}
    Veronika Ro\v{c}kov\'{a}\footnote{  Assistant Professor in Econometrics and Statistics and James S. Kemper Faculty Scholar at the  {\sl \small Booth School of Business, University of Chicago}}
    }
 
  \maketitle

\bigskip

\begin{abstract}
\noindent Spike-and-Slab Deep Learning (SS-DL) is a fully Bayesian  alternative to Dropout for improving generalizability of deep ReLU networks. This new type of regularization enables  provable recovery of smooth input-output maps with {\sl unknown} levels of smoothness. 
  Indeed, we  show that  the posterior distribution concentrates at the near minimax rate for $\alpha$-H\"{o}lder smooth maps, performing as well as if we knew the smoothness level $\alpha$ ahead of time.
   Our result sheds light on architecture design for deep neural networks, namely the choice of depth, width and sparsity level.
   These network attributes typically depend on  unknown smoothness  in order to be optimal. We obviate this constraint with the fully Bayes construction.
   As an aside, we show that SS-DL does not overfit in the sense that the posterior concentrates on smaller networks with fewer (up to the  optimal number of) nodes and links.
Our results provide new theoretical justifications for deep ReLU networks from a Bayesian point of view.
\end{abstract}

\noindent%
{\it Keywords:}  Deep Learning,  Non-parametric Bayes, Posterior Concentration, Rectified Linear Units, Sparsity,  Spike-and-Slab
\vfill

\newpage
\spacingset{1.45} 

\section{Introduction}

Deep learning constructs are powerful tools for pattern matching and prediction. Their empirical success has been accompanied by a number of theoretical developments addressing (a) why and when neural networks generalize well, (b) when do deep networks out-perform shallow ones and  (c) which activation functions and  with how many layers.  Despite the flurry of research activity, there are still many theoretical gaps in  understanding why deep neural networks work so well. 
In this paper, we provide several new insights by studying  the speed of posterior concentration around the optimal predictor, and in doing so we make a contribution to the Bayesian literature on deep learning rates. 

Bayesian non-parametric methods are proliferating rapidly in statistics and machine learning, but their theoretical study has not yet kept pace with their application. 
Lee (2000), for example, showed consistency  of posterior distributions over single-layer sigmoidal neural networks. Our contribution builds on this line of research in  three fundamental aspects: (a) we focus on deep rather than single-layer, (b) we focus on rectified linear units (ReLU) rather than sigmoidal squashing functions, (c) we show that the posterior converges at an optimal speed beyond the mere fact that it is consistent.
 To achieve these goals, we adopt a statistical perspective on deep learning through the lens of  non-parametric regression. 

Using deep versus shallow networks can be justified theoretically in a number of ways. First, while both shallow and deep neural networks (NNs) are universal approximators (i.e. can approximate any continuous multivariate function arbitrarily well on a compact domain), Mhaskar et al. (2017) show that deep nets can use  exponentially fewer f parameters to achieve the same level of approximation accuracy for compositional functions. Second, Kolmogorov (1963) showed that superpositions of univariate semi-affine functions provide a universal basis for  multivariate functions.
Telgarsky (2016) provides examples of functions that cannot be represented efficiently with shallow networks and
Kawaguchi et al (2017) explains why deep networks generalize well.  In related  work, Poggio et al. (2017)  show how deep networks can avoid the curse of dimensionality for compositional functions. These theoretical results are growing and our goal is to show how they can be leveraged to show posterior concentration rates for deep learning. In particular, we will build on approximation properties of deep ReLU networks  characterized recently  by Schmidt-Hieber (2017).

Deep ReLU activating functions can  also be justified theoretically. 
Evidence exists  that training deep learning proceeds best when  neurons are either off or operate in a linear way. Glorot et al. (2011) show that
ReLU  functions outperform hyperbolic tangent or sigmoid squashing functions,  both in terms of statistical and computational performance. The success of ReLUs has  been partially attributed in their ability to avoid vanishing gradients and their expressibility properties.   The attractive approximation properties are discussed in Telgarsky (2017) who shows that there exists a ReLU network for 
approximating any rational function whose size is polynomial  in $ \log_2 ( 1/\varepsilon) $ versus polynomial in $1/\varepsilon $, given the approximation error  $\varepsilon$. Vitushkin (1964) showed that one needs non-differentiable functions for the hidden layers to be able to fully approximate any function. 
Montufar et al. (2014) provide a theoretical estimate of the number of linear regions that ReLU networks can synthesize. 
Schmidt-Hieber (2017) points out a curious aspect of ReLU activators that their composition can yield rate-optimal reconstructions of smooth functions of an arbitrary order, not only up to order 2 which would be expected from piecewise linear approximators. 

Dinh  et al. (2017) states that ``explaining why deep learning can generalize well, despite their overwhelming capacity, is an open area of research". 
It is also commonly perceived that generalizability of neural networks can be improved with regularization (Goodfellow et al., 2016). 
Regularization, loosely defined as any modification to a learning algorithm that is intended to reduce its test error but not its training error (Goodfellow et al., 2016) can be achieved in many different ways. The choice of the activation function (ReLU, in particular) is one possible avenue which we will analyze.

Another way to regularize a neural network is by adding noise to the learning process. For example, Dropout regularization (Srivastava et al., 2014) samples from (and averages over) thinned networks obtained by randomly dropping out nodes together with their connections. While motivated as  stochastic regularization,  Dropout can be regarded  as deterministic $\ell_2$ regularization obtained by margining out  Dropout noise (Wager, 2014).
Dropout averaging over sparse architectures pertains, at least conceptually, to Bayesian model averaging under spike-and-slab priors.
 Spike-and-slab regularization assigns a prior distribution over sparsity patterns (models) and performs model averaging with posterior model probabilities as weights (George and McCulloch, 1993). Similar to Dropout, spike-and-slab effectively switches off model coefficients.
However, Dropout averages out patterns using equal weights rather than  posterior model probabilities. 

Our approach embeds   $\ell_0$ penalization within the layers of  deep learning  and capitalizes on its connection to  subset selection.
Our goal is then to exploit spike-and-slab constructions not necessarily as a tool for model selection, but rather as 
a fully Bayesian alternative to dropout in order  to
 (a) inject sparsity in deep learning to build  stable network architectures, (b) achieve
 adaptation to the unknown aspects of the regression function in order to achieve near-minimax performance for estimating smooth regression surfaces. 

The rest of the paper is outlined as follows. Section 2 describes our statistical framework for analyzing deep learning predictors.
Section 3 defines deep ReLU networks. 
Section 4 constructs an appropriate spike-and-slab regularization for deep learning.  Section 5 provide posterior concentration results for
sparse deep ReLU networks and reviews function approximation rates. Finally, Section  6 concludes with a discussion.

\subsection{Notation}
The $\varepsilon$-covering number of a set $\Omega$ for a semimetric $d$, denoted by $\mathcal{E}(\varepsilon; \Omega; d),$ is the minimal number of $d$-balls of radius $\varepsilon$ needed to cover set $\Omega$. The notation $\lesssim$  will be used to denote inequality up to a constant.

\section{Deep Learning: A Statistical Framework}
Deep Learning, in its simplest form, reconstructs high-dimensional input-output mappings. To fix notation, 
let $Y\in\R$ denote a (low dimensional) output and $ \x = (x_1 , \ldots , x_p )' \in [0,1]^p $  a (high dimensional) set of inputs.  

From a machine learning viewpoint,  predicting an outcome from a set of features is typically framed as  noise-less non-parametric regression for recovering $f_0:[0,1]^p\rightarrow\R$. Given inputs $ \x_i $ of training data and outputs $Y_i=f_0(\x_i)$ for  $1\leq i\leq n$, 
the goal is to learn a deep learning architecture $\wh f^{DL}_{\B}$ such that $\wh f^{DL}_{\B} ( \bx ) \approx f_0 ( \bx ) $ for $ \bx \notin \{ \bx_i \}_{i=1}^n $.
Training neural networks is then positioned as an optimization problem for finding values $\wh\B\in\R^T$ that minimize empirical risk ($L^2$-recovery error on training data)
together with a regularization term, i.e. 
\begin{equation}\label{train}
\wh\B=\arg\min\limits_{\B} \; \sum_{i=1}^n [f_0( \bx_i ) - f^{\text{DL}}_{\B}  ( \bx_i ) ]^2 + \phi (\B ) 
\end{equation}
where $ \phi(\B) $ is a  penalty over the weights and offset parameters $ \B$. In practice, this is most often carried out with 
some form of stochastic gradient descent (SGD) (see e.g. Polson and Sokolov (2017) for an overview).

From a statistical viewpoint,  deep learning is often embedded within non-parametric regression  where responses are linked to fixed predictors in a stochastic fashion through 
\begin{equation}\label{lik}
Y_i=f_0(\x_i)+\varepsilon_i,\quad \varepsilon_i\iid\mathcal{N}(0,1),  \quad 1 \leq i \leq n .
\end{equation}
We define by $\Ha_p=\{f:[0,1]^p\rightarrow\R;\|f\|_{\Ha}<\infty\}$  the class of $\alpha$-H\"{o}lder smooth functions on a unit cube $[0,1]^p$ for some $\alpha>0$, 
where $\|f\|_{\Ha}$ is the H\"{o}lder norm.  The true generative model,  giving rise to \eqref{lik}, will be denoted with $\P_{f_0}^{(n)}$.  Assuming $f_0\in\Ha_p$, we want to reconstruct $f_0$
 with $\wh f^{DL}_{\B}$ so that the empirical $L^2$ distance
$$
\|\wh f^{DL}_{\B}-f_0\|_n^2=\frac{1}{n}\sum_{i=1}^n[\wh f^{DL}_{\B}(\x_i)-f_0(\x_i)]^2
$$
is at most a constant multiple away from the minimax rate $\varepsilon_n=n^{-\alpha/(2\alpha+p)}$ (up to a log factor).
Unlike  related statistical developments (Schmidt-Hieber (2017), Bauer and Kohler (2017)),  we approach the reconstruction problem from a purely Bayesian point of view.
While the optimization problem \eqref{train} has a Bayesian interpretation as MAP estimation under regularization priors, here we study the behavior of the {\sl entire posterior}, not just its mode.

Our approach rests on careful constructions of prior distributions $\pi(f^{DL}_{\B})$ over deep learning architectures. In Bayesian non-parametrics, the  quality of priors can be often quantified with the speed  at which the posterior distribution shrinks around the true regression function as $n\rightarrow\infty$. These statements are ultimately framed in a frequentist way, describing the typical behavior of the posterior under the true generative model $\P_{f_0}^{(n)}$. Posterior concentration rate results are now 
entering the machine learning community as a tool for (a) obtaining more insights into Bayesian methods (van der Pas and Rockova (2017), Rockova and van der Pas (2017)) 
and (b)  prior calibrations. These results quantify not only the typical distance between a point estimator (posterior mean/median) and the truth, but also the typical spread of the posterior around the truth. Ideally, most of the posterior mass should be concentrated in a ball centered around the true value $f_0$ with a radius proportional to the minimax rate $\varepsilon_n$. 
Adopting this perspective, we study posterior concentration  for  deep learning, providing new theoretical justifications for neural networks from a Bayesian point of view.

In the  construction of deep learning priors, a few key questions emerge.
How does one choose the architecture $f^{DL}_{\B}$: how deep and what activation functions? The choice typically depends on how quickly one can reconstruct $f_0$. 
We focus on deep ReLU networks, motivated by the following example.

\subsection{Motivating Example} Mhaskar et al. (2017, remark 8) shows that the bivariate function $  f_{10} (x_1,x_2) = (x^2_1 x^2_2 - x_1^2 x_2 + 1)^{ 2^{10}} $ can be approximated more
efficiently by a deep ReLU  neural net than a shallow combination of ridge functions. To verify this observation, we simulate data from the following polynomial
$$
f_1 (x_1,x_2) =(x^2_1 x^2_2 - x^2_1 x_2 + 1)^2
$$
where $(x_1,x_2)$ take values in $[-1, 1]^2$. We discretize the grid for a total training data of $ 201 \times 201 = 40401 $ observations.

There exists an exact Kolmogorov representation for this function as a superposition of semi-affine functions if we use the identities
for the inner polynomial functions
\begin{align}
x_1^2 x_2 & = \frac{1}{2} ( x_1^2 + x_2 )^2 - \frac{1}{2} ( x_1^2 - x_2 )^2\\
( x_1 x_2 )^2 & = \frac{1}{4} ( x_1 + x_2 )^4 + \frac{7}{4 \cdot 3^3} ( x_1 - x_2)^4
- \frac{1}{2 \cdot 3^3} ( x_1 + 2 x_2 )^4 - \frac{2^3}{3^3} ( x_1 + 2 x_2 )^4. 
\end{align}
Following the theoretical results of Mhaskar et al. (2017), we build an $11$-layer deep ReLU network is used to approximate this polynomial. There are 9 units in the first hidden layer and 3 units in the further layers. All activation functions are ReLU. For comparison, we also build a shallow network with only 1 hidden layer but 2048 units.

The MSE for the models, both trained with SGD in {\tt TensorFlow} and {\tt Keras}, are: 
$11$ layers, $39$ units with  
$MSE(train) = 0.0229, MSE(validation) = 0.0112 $ and 
$1$ layer, $2048$ units with  
$ MSE(train) = 0.0441, MSE(validation) = 0.09 $. 
Both models outperform random forests.

\section{Deep ReLU Networks}

We now formally describe the generative model  that  gives rise to deep rectified linear unit networks.
To fix notation, we write a deep neural network $ f_{\B}^{DL} ( \bx ) $ as an iterative mapping specified by hierarchical layers of abstraction.
With $L\in\bN$ we denote the number of hidden layers and  with  $p_l\in \bN$ the number of neurons
at the $l^{th}$ layer. Setting $p_0=p$ and $p_{L+1}=1$, we denote   with $\bm p=(p_0,\dots, p_{L+1})'\in\bN^{L+2}$ the vector of neuron counts for the entire network.
The deep network is then characterized by a set of model parameters
\begin{equation}\label{parmb}
\B=\{(\W_1,\a_1),(\W_2,\a_2),\dots, (\W_L,\a_L)\},
\end{equation}
  where $\a_l\in\R^{p_{l}}$ are  shift vectors and $\W_L$ are $p_l\times p_{l-1}$ weight matrixes 
  that link neurons between the $(l-1)^{th}$ and $l^{th}$ layers. Nodes in the ReLU network are connected through the following activation function 
 $\sigma_{\a}:\R^r\rightarrow\R^r$  
  $$
  \sigma_{\a}\left(
  \begin{matrix}
  y_1\\
  y_2\\
  \vdots\\
  y_r
  \end{matrix}\right)=
  \left(\begin{matrix}
  \sigma(y_1-b_1)\\
  \sigma(y_2-b_2)\\
  \vdots\\
  \sigma(y_r-b_r)
  \end{matrix}\right),
  $$
  where  $\sigma(x)=ReLU(x) = \max(x,0) $ denotes the rectified linear unit activation function.
  
Deep ReLU neural networks with $L$ layers and a vector of $\bm p$ hidden nodes define an input-output map
 $f_{\B}^{DL}(\x):\R^p\rightarrow\R$  of the form
   \begin{equation}\label{NN}
   f_{\B}^{DL}(\x)=\W_{L+1}\sigma_{\a_L}\left(\W_{L}\sigma_{\a_{L-1}}\hdots \sigma_{\a_1}(\W_1\x)\right).
  \end{equation}

The  representation \eqref{NN} casts neural networks as nested embeddings that allow to express the data flow through a network using variable-size data structures.
Varying the number of active neurons allows a model to control the effective dimensionality for a given input and achieve desired approximation accuracy. 
Following Schmidt-Hieber (2017), we focus on a specific type of networks with an equal number of hidden neurons, i.e. $p_l=12p N$  for each $1\leq l\leq L$  for some $N\in\bN$. 
 We will see  later in Section \ref{sec:post_conc}, that the optimal network width multiplier $N$ should relate to the dimensionality $p$ and smoothness $\alpha$.



\section{Spike-and-Slab Regularization}

 We focus on uniformly bounded $s$-{\sl sparse} deep nets with bounded parameters
  $$
 \mF(L,\bm p,s)=\left\{f_{\B}^{DL}(\x)\,\,\text{as in \eqref{NN}}:  \|f_{\B}^{DL}\|_\infty<F\,\,\text{and}\,\, \|\B\|_\infty\leq 1\,\,\text{and}\,\,\|\B\|_0\leq s \right\},
 $$
where $s\in\bN$ is the sparsity level, i.e. an upper bound on the number of edges in the network, and where $F>0$.

The amount of regularization needed to achieve optimal performance typically depends on unknown properties of functions one wishes to approximate such as their smoothness, compositional pattern and/or the number of variables they depend on.   Hierarchical Bayes procedures have the potential to become fully adaptive and achieve (nearly) minimax performance, as if one knew  these properties ahead of time.  We will leverage the fully Bayes framework and devise a hierarchical  procedure which can learn the optimal level of sparsity needed to achieve near-minimax rates of posterior convergence of neural networks. The cornerstone of this development will be the spike-and-slab framework.

Denote with 
 \begin{equation}\label{eq:numpar}
T=\sum_{l=0}^{L}p_{l+1}(p_l+1)-p_{L+1}<(12\,p\, N)^{L+1}
\end{equation}
the number of parameters in a fully connected network  with $L$ layers and a vector of $\bm p$ neurons.
We treat  the  stacked vector of model coefficients $\B=(\beta_1,\dots, \beta_T)'$ in \eqref{parmb} as a random vector arising from  the  {\sl spike-and-slab} prior defined hierarchically through
\begin{equation}\label{ssdl}
\pi(\beta_j\C\gamma_j)=\gamma_j\wt\pi(\beta_j)+(1-\gamma_j)\delta_0(\beta_j),\quad\text{where}\quad \wt\pi(\beta)=\frac{1}{2}\1_{[-1,1]}(\beta)
\end{equation}
is a uniform prior on an interval $[-1,1]$. Here, $\delta_0(\beta)$ is a Dirac spike at zero and  $\gamma_j\in\{0,1\}$ indicting whether or not $\beta_j$ is nonzero. Now  collate the binary indicators into a vector $\bg=(\gamma_1,\dots,\gamma_T)'\in\{0,1\}^T$ that encodes the connectivity pattern.  We assume that, given the sparsity level $s=|\bg|$, all architectures are equally likely a-priori, i.e.
\begin{equation}\label{prior:gamma}
\pi(\bg\C s)=1/{{T\choose s}}.
\end{equation}
The sparsity level $s$ will be first treated as fixed and later assigned a prior with exponential decay. The spike-and-slab construction, defined by \eqref{ssdl} and \eqref{prior:gamma}, has been  studied in linear models  by Castillo and van der Vaart (2012) and in trees/forests by Rockova and van der Pas (2017), who showed that with a suitable prior on $s$, the posterior can adapt to the unknown level of sparsity. We conclude a very similar property for our proposed {\sl spike-and-slab deep learning} (SS-DL) procedure.

It is worthwhile to point out that the prior in \eqref{ssdl} effectively zeroes out individual links rather than entire groups of links attached to one node. The second approach was explored by Ghosh and Doshi-Velez (2017), who suggested assigning a Horseshoe prior on the node preactivators, diminishing influence of individual neurons. The Dropout procedure is also motivated as erasing nodes rather than links. 

\section{Posterior Concentration for Deep Learning}\label{sec:post_conc}

Reconstruction of a function $f_0$ from the training data $ (Y_i, \bx_i)_{i=1}^n $ can be achieved using a Bayesian posterior.
This requires placing a prior measure $\Pi(\cdot)$ on   $\mF(L,\bm p, s)$,  the set of qualitative guesses of  $f_0$. Given observed data $\Y^{(n)}=(Y_1,\dots, Y_n)'$,  inference about $f_0$ is then carried out via the posterior distribution
$$\Pi(A\C\Y^{(n)})=\frac{\int_A \prod_{i=1}^n \Pi_f(Y_i\C\x_i)\d\Pi(f)}{\int \prod_{i=1}^n \Pi_f(Y_i\C\x_i)\d\Pi(f)}\quad\forall A\in\mathcal{B}$$
where $\mathcal{B}$ is a $\sigma$-field on $\mF(L,\bm p, s)$ and 
where $\Pi_f(Y_i\C\x_i)$ is the likelihood function for the output $Y_i$ under $f$.

Our goal is to determine
\emph{how fast the posterior probability measure concentrates around $f_0$} as $n\rightarrow\infty$?
This speed can be assessed by inspecting the size of the smallest  $\|\cdot\|_n$-neighborhoods around $f_0$ that contain most of the posterior probability (Ghosal and van der Vaart, 2007). For a diameter $\varepsilon>0$ and some $M>0$, we denote with 
$$
A_{\varepsilon, M}=\{f^{DL}_{\B}\in\mF(L,\bm p, s):\|f^{DL}_{\B}-f_0\|_n\leq M\,\varepsilon \}
$$ 
the $M\varepsilon$-neighborhood centered around $f_0$. Our goal is  to show that
\begin{equation}\label{concentration}
\Pi(A_{\varepsilon_n, M_n}^c \C \Y^{(n)} ) \rightarrow 0\quad\text{in $\P_{f_0}^{(n)}$-probability as $n\rightarrow\infty$} 
\end{equation}
 for any $M_n\rightarrow\infty$ and for $\varepsilon_n\rightarrow 0$  such that $n\,\varepsilon_n^2\rightarrow\infty$. We will position our results using  $\varepsilon_n=n^{-\alpha/(2\alpha+p)}\log^\delta(n)$ for some $\delta>0$, the near-minimax rate for a $p$-dimensional $\alpha$-smooth function. Proving techniques for statements of type \eqref{concentration} 
 were established in several pioneering works including Ghosal, Ghosh and van der Vaart (2000), Ghosal and van der Vaart (2007), Shen and Wassermann (2001), Wong and Shen (1995), Walker et al. (2007).


The statement \eqref{concentration} can be proved by verifying the following three conditions (suitably adapted from Theorem 4 of Ghosal and van der Vaart (2007)):

\begin{equation}\label{eq:entropy1}
\displaystyle \sup_{\varepsilon > \varepsilon_n} \log \mathcal{E}\left(\tfrac{\varepsilon}{36}; A_{{\varepsilon},1}\cap\mF_n; \|.\|_n\right) \leq n\,\varepsilon_n^2
\end{equation}
\begin{equation}\label{eq:prior1}
\displaystyle {\Pi(A_{\varepsilon_n,1})}\geq \e^{-d\,n\,\varepsilon_n^2 }
\end{equation} 
\begin{equation}\label{eq:remain1}
\displaystyle \Pi(\mathcal{F} \backslash \mathcal{F}_n) = o(\e^{-(d+2)\,n\,\varepsilon_n^2})
\end{equation}
 for some $d>2$.
Above,  $\mathcal{F}_n\subseteq\mF(L,\bm p, s)$ is an approximating space (sieve) that captures the essence of the parameter space. Condition \eqref{eq:entropy1} restricts the size of the model as measured by the Le Cam dimension (or local entropy). The Le Cam dimension, defined here in terms of the log-covering number of $A_{\varepsilon,1}\cap \mF_n$, gives rise to the minimax rate of convergence under certain conditions (Le Cam, 1973). The sieve should not be too large (Condition \eqref{eq:entropy1}), it should be rich enough to approximate  $f_0$ well and it should receive most of the prior mass  (Condition \eqref{eq:remain1}).



The prior concentration (Condition \eqref{eq:prior1}) is needed to ensure that the prior rewards shrinking neighborhoods of $f_0$. This requirement balances with Condition \eqref{eq:entropy1}. The richer the model class (i.e. the more layers/neurons), the better the approximation to $f_0$. It is essential that the prior is supported on models that are good approximators, but that do not overfit.
It is commonly agreed that the approximation gap should be no larger than a constant multiple of $\varepsilon_n$. Below, we review some known results about expressibility of neural networks to get insights into how many layers/neurons are needed to achieve the desired level of  approximation accuracy.

\subsection{Function Class Approximation Rates}

There is an extensive literature on the approximation properties of neural nets. Many tight approximation results are available for simple functions such as indicators
$ f(\x) = \1_B (\bx ) $ where $ B$ is a unit ball (Cheang and Barron, 2000) or a half-space (Cheang (2010), Kainen et al. (2003, 2007) and K\r{r}kova et al. (1997)).
Recent results on the efficiency of ridge NNs (which arise as shallow learners of the form $f=\sum_{j=1}^n a_j \sigma(w_j^Tx-b_j)$ for sigmoidal $\sigma(\cdot)$) are available in 
Ismailov(2017), Klusowki and Barron (2016, 2017). Pinkus (1999) and Petrushev (1999) provide some of the early bounds.

In general, one tries to characterize the asymptotic behavior of the approximation error as follows:
\begin{equation}\label{approx:size}
\|f_0 -  \wh f  \| = \mathcal{O} ( N^{- \frac{\alpha}{p}} ) \; \iff \; \|{ f_0-   \wh f  }\| \leq \varepsilon  \; {\rm where} \;  N =\mathcal{O} ( \varepsilon^{- \frac{p}{\alpha}} ), 
\end{equation}
where $f_0$ is a real-valued $\alpha$-smooth function, $\wh f$ is the neural-network reconstruction and where $N$ is the ``size" of the network (typically the number of hidden nodes).
Different bounds can be obtained for different classes of $f_0$ and different norms $\|\cdot\|$.
The goal is to assess how complex  the network ought to be for it to approximate $f_0$ well (up to a constant multiple of $\varepsilon_n$).

 For deep networks, one also wants to find the asymptotic behavior of the approximation error as a function of depth, not only its size.
The following Lemma will be an essential building block in the proof of our main theorem. It summarizes the expressibility of deep ReLU networks by linking their approximation error (when estimating H\"{o}lder smooth functions)  to the network depth, width and sparsity.

\begin{lemma}(Schmidt-Hieber, 2017)\label{lemma_approx}
Assume that $f_0\in\Ha_p$ for some $\alpha>0$. Then for any $N\geq (\alpha+1)^p\vee(\|f_0\|_\Ha+1)$ there exists a neural network $\wh f\in  \mF(L^\star,\bm p_N^{L^\star}=
(p,12pN,\dots, 12pN,1),s^\star)'$ with 
\begin{equation}\label{lstar}
L^\star=8+(\lfloor\log_2 (n)\rfloor+5)(1+\lceil \log_2 p\rceil)
\end{equation}
layers and sparsity level $s^\star$ satisfying
\begin{equation}\label{sstar}
s^\star\leq 94\, p^2 (\alpha+1)^{2p} N\,(L^\star+\lceil \log_2 p\rceil )
\end{equation}
such that 
$$
\|\wh f-f_0\|_\infty\leq (2\|f_0\|_{\Ha}+1)3^{p+1}\frac{N}{n}+\|f_0\|_{\Ha}2^\alpha N^{-\alpha/p}.
$$
\end{lemma}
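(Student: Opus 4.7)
My plan is to build an explicit deep ReLU network that locally emulates the Taylor expansion of $f_0$ on a fine grid of subcubes, with the classical Hölder remainder absorbing one part of the approximation error and the ReLU emulation of polynomials absorbing the other. Concretely, I would partition $[0,1]^p$ into $N$ axis-aligned subcubes of side length roughly $N^{-1/p}$, and on each subcube $Q_k$ replace $f_0$ by its degree-$\lfloor\alpha\rfloor$ Taylor polynomial $P_k$ around the cube's center. The Hölder property guarantees $\|f_0 - P_k\|_{\infty,Q_k}\lesssim \|f_0\|_{\mathcal{H}^\alpha}\,N^{-\alpha/p}$, which is one half of the stated error bound. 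The remaining task is to realize $\sum_k P_k \,\psi_k$ by a ReLU network, where $\{\psi_k\}$ is a partition-of-unity localizing to the subcubes.

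The crux is to emulate polynomial multiplication with ReLU units. I would first invoke Yarotsky's construction to build, for any target accuracy $\varepsilon$, a ReLU network $\mathrm{Sq}_\varepsilon$ of depth and size $\mathcal{O}(\log(1/\varepsilon))$ approximating $x\mapsto x^2$ on $[0,1]$, using iterated sawtooth (tent) functions $T(x)=2\sigma(x)-4\sigma(x-1/2)+2\sigma(x-1)$. Pairwise multiplication then follows from $xy=\tfrac{1}{4}[(x+y)^2-(x-y)^2]$, and a depth-$\lceil \log_2 p\rceil$ binary tree of such multipliers produces any monomial of degree at most $\lfloor\alpha\rfloor\le p$ in $\x$. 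Composing the monomial networks linearly with fixed Taylor coefficients yields the local polynomial networks; choosing $\varepsilon\asymp N/n$ so that the multiplication error is swallowed into the $N/n$ term in the bound delivers the other half of the stated inequality.

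Localization is handled by building tent-shaped $\psi_k$'s as shallow ReLU combinations: for each coordinate $x_j$, three ReLU units produce a piecewise-linear bump supported on a width-$2N^{-1/p}$ interval, and the product of these $p$ one-dimensional bumps is assembled through the same $\log_2 p$-deep multiplication tree. Concatenating "Taylor-polynomial subnetwork $\times$ bump subnetwork" and summing over the $\asymp N$ cubes gives the full approximator $\wh f$. The total depth is dominated by the squaring tower $\mathcal{O}(\log n)$ stacked with the $\log_2 p$ multiplication trees, which matches the prescribed $L^\star$ in \eqref{lstar}; the width at every intermediate layer can be kept at $12pN$ by running the $N$ local branches in parallel; and the nonzero-weight count is a product of these factors giving the $s^\star$ bound in \eqref{sstar}.

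The main obstacles are bookkeeping and rescaling. First, all weights and shifts must lie in $[-1,1]$, yet Taylor coefficients and the constants in Yarotsky's identities are not automatically bounded; I would split each scalar multiplication across several layers and absorb factors into the depth, a standard device. Second, the width must stay within $12pN$ at every layer, which forces careful parallel routing of the $N$ subcube branches and reuse of a single multiplication module per layer rather than naive concatenation. Third, one must verify the sparsity count honestly: the degree-$\lfloor\alpha\rfloor$ monomials number $\binom{p+\lfloor\alpha\rfloor}{p}\le (\alpha+1)^p$ per subcube, and multiplied by the $\mathcal{O}(\log n)$ depth of the squaring tower and the $N$ cubes this yields the factor $p^2(\alpha+1)^{2p}N(L^\star+\lceil\log_2 p\rceil)$. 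Once these three pieces of bookkeeping are aligned, the two error contributions (Taylor remainder $\asymp N^{-\alpha/p}$ and squaring error $\asymp N/n$) combine into the announced bound.
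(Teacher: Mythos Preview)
Your sketch is the construction behind Theorem~3 of Schmidt-Hieber (2017), which is exactly what the paper invokes: its own proof is the single sentence ``Apply Theorem~3 of Schmidt-Hieber (2017) with $m=\lfloor\log_2(n)\rfloor$.'' So you are not taking a different route but unpacking the cited result. The ingredients you list---local Taylor polynomials on $\sim N$ subcubes yielding the $\|f_0\|_{\Ha}2^\alpha N^{-\alpha/p}$ remainder, Yarotsky's iterated-sawtooth squaring network of depth $\mathcal{O}(\log n)$, the polarization identity $xy=\tfrac14[(x+y)^2-(x-y)^2]$, a $\lceil\log_2 p\rceil$-deep binary tree for the tensor-product hat functions, and the weight-rescaling/width/sparsity bookkeeping---match Schmidt-Hieber's argument. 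Two small corrections if you write it out in full: the $N/n$ term arises from setting the per-multiplication accuracy to $2^{-m}\asymp 1/n$ (this is the choice $m=\lfloor\log_2 n\rfloor$) and then summing $\sim N$ local contributions, not from choosing the elementary accuracy as $N/n$; and your assertion $\lfloor\alpha\rfloor\le p$ is not among the lemma's stated hypotheses, although it holds in Theorems~\ref{thm1} and~\ref{thm2} where the lemma is applied.
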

\proof
Apply Theorem 3 of Schmidt-Hieber (2017) with $m=\lfloor\log_2(n)\rfloor$. 

\begin{remark}
In a related result, Yarotsky (2017) shows that there exists a ReLU network that satisfies $\| f - \wh f^{DL} \|_\infty \leq \varepsilon  $ with sparsity $ s=c \cdot \varepsilon^{- \frac{p}{\alpha}} / \log_2 ( 1 / \varepsilon ) + 1 $ 
and depth $  L=c \cdot ( \log_2 ( 1/\varepsilon ) + 1 )  $ where $ c = c ( p , \alpha ) $. 
Petersen and Voigtlaender (2017) extend this result to $L^2$-smooth functions. 
\end{remark}

We assume that $p=O(1)$ as $n\rightarrow\infty$. Lemma \eqref{lemma_approx} essentially states that in order to approximate an $\alpha$-H\"{o}lder smooth function with an error  that is at most a constant multiple of $\varepsilon_n$, we have to choose $L\propto\log(n)$ layers with sparsity $s\leq C_S\lfloor n^{p/(2\alpha+p)}\rfloor$. This follows by setting $N=C_N\lfloor n^{p/(2\alpha+p)}/\log(n)\rfloor$.

\subsection{Posterior Concentration for Sparse ReLU Networks}

We now formalize large sample statistical properties of posterior distributions over ReLU networks.  First, we consider a hierarchical prior distribution on $\mF(L,\bm p,s)$, keeping $L$, $\bm p$ and $s$ fixed as if they were known. The prior distribution now only consists of the prior on the connectivity pattern \eqref{prior:gamma} and the spike-and-slab prior on the weights/offsets \eqref{ssdl}.

Our first result provides guidance for calibrating Bayesian deep sparse  ReLU networks (choosing the sparsity level and the number of neurons) {\sl when the level of smoothness $\alpha$ is known}. 
The result can be regarded as a Bayesian analogue of Theorem 1 of Schmidt-Hieber (2017), who showed near-minimax rate-optimality of a sparse multilayer ReLU network estimator that minimizes empirical least-squares. This was the first result on rate-optimality of deep ReLU networks in non-parametric regression, obtained assuming that the sparsity $s$ is known and that the function $f_0$ is a composition of H\"{o}lder functions. 
We build on this result and show that the {\sl entire posterior distribution} for deep sparse ReLu neural networks is concentrating at the near-minimax rate, when $\alpha$ is known  and when $f_0$ is a H\"{o}lder smooth function.  In the next section, we provide an adaptive result which no longer requires the knowledge of $\alpha$.

\begin{theorem}\label{thm1}
Assume $f_0\in\Ha_p$, where $p=O(1)$ as $n\rightarrow\infty$, $\alpha<p$ and $\|f_0\|_\infty\leq F$. 
Let  $L^\star$ be as in \eqref{lstar}, $s^\star$ as in \eqref{sstar} and  $\bm p^\star=(p,12pN^\star,\dots,12pN^\star,1)'\in\N^{L^\star+2}$, where  
$N^\star=C_N\,\lfloor n^{p/(2\alpha+p)}/\log(n)\rfloor$.
Then the posterior probability concentrates at the rate $\varepsilon_n=n^{-\alpha/(2\alpha+p)}\log^\delta(n)$  for $\delta>1$ in the sense that
\begin{equation}\label{conc}
\Pi(f_{\B}^{DL}\in \mF(L^\star,\bm p^\star,s^\star): \|f-f_0\|_n>M_n\,\varepsilon_n\C\Y^{(n)})\rightarrow 0
\end{equation}
in $\P_0^n$ probability as $n\rightarrow\infty$ for any  $M_n\rightarrow\infty$.

\end{theorem}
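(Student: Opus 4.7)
The plan is to apply the Ghosal--van der Vaart (2007) posterior contraction machinery by verifying the three conditions \eqref{eq:entropy1}--\eqref{eq:remain1} with sieve $\mF_n = \mF(L^\star, \bm p^\star, s^\star)$. Because the prior is supported entirely on this set, condition \eqref{eq:remain1} holds trivially ($\Pi(\mF \setminus \mF_n) = 0$), so the work reduces to the local entropy bound \eqref{eq:entropy1} and the prior concentration bound \eqref{eq:prior1}.

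Both remaining bounds hinge on the fact that a deep ReLU network is Lipschitz in its parameters. Unwinding \eqref{NN} layer by layer and using the $1$-Lipschitz property of ReLU together with $\|\B\|_\infty \le 1$, one obtains
\[
\|f^{DL}_{\B} - f^{DL}_{\B'}\|_\infty \;\le\; \Lambda\, \|\B - \B'\|_\infty \quad\text{with}\quad \Lambda \le (12 p N^\star)^{L^\star+1};
\]
since $L^\star \asymp \log n$ and $N^\star$ is polynomial in $n$, $\log \Lambda$ is polylogarithmic in $n$. For the entropy, I would first condition on the sparsity pattern $\bg$ (at most $\binom{T}{s^\star}$ choices) and then cover the admissible nonzero entries in $[-1,1]^{s^\star}$ in $\ell_\infty$ at scale $\varepsilon/(36\Lambda)$ using at most $(72\Lambda/\varepsilon)^{s^\star}$ points. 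Combining and using $\|\cdot\|_n \le \|\cdot\|_\infty$ yields
\[
\log \mathcal{E}\!\left(\tfrac{\varepsilon}{36};\, A_{\varepsilon,1}\cap \mF_n;\, \|\cdot\|_n\right) \;\le\; \log \binom{T}{s^\star} + s^\star \log(72\Lambda/\varepsilon),
\]
which, after plugging in $T \le (12 p N^\star)^{L^\star+1}$, $s^\star \lesssim n^{p/(2\alpha+p)}$ (from \eqref{sstar} with $N^\star \asymp n^{p/(2\alpha+p)}/\log n$) and $\varepsilon \ge \varepsilon_n$, is of order $n^{p/(2\alpha+p)} \log^2 n$. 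Since $n\varepsilon_n^2 = n^{p/(2\alpha+p)} \log^{2\delta} n$ with $\delta > 1$, this is $O(n\varepsilon_n^2)$ as required.

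For the prior mass, Lemma \ref{lemma_approx} supplies an approximant $\wh f \in \mF(L^\star, \bm p^\star, s^\star)$ with parameter vector $\wh\B$ and support $\wh\bg$ of size at most $s^\star$ satisfying $\|\wh f - f_0\|_\infty \lesssim N^\star/n + (N^\star)^{-\alpha/p} \lesssim n^{-\alpha/(2\alpha+p)} \log^{\alpha/p} n \lesssim \varepsilon_n$. By the Lipschitz estimate, any $\B$ sharing the support $\wh\bg$ and satisfying $\|\B - \wh\B\|_\infty \le \varepsilon_n/(2\Lambda)$ gives $\|f^{DL}_{\B} - f_0\|_n \le \|f^{DL}_{\B} - f_0\|_\infty \le \varepsilon_n$. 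Under the hierarchy \eqref{prior:gamma}--\eqref{ssdl}, the prior probability of this event is bounded below by
\[
\Pi(A_{\varepsilon_n,1}) \;\ge\; \binom{T}{s^\star}^{-1} \left(\tfrac{\varepsilon_n}{4\Lambda}\right)^{s^\star},
\]
whose negative logarithm is of order $s^\star \log T + s^\star \log(\Lambda/\varepsilon_n) = O(n\varepsilon_n^2)$ by the same counting, giving \eqref{eq:prior1}.

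The hard part will be controlling $\Lambda$: the naive depth-product bound is essentially the best available without further structural assumptions on the weight matrices, and if the depth were polynomial in $n$ then $\log \Lambda$ alone would overwhelm $n\varepsilon_n^2$. The logarithmic depth $L^\star \asymp \log n$ prescribed by Lemma \ref{lemma_approx} is therefore load-bearing, not cosmetic: it is precisely what keeps the driving quantity $s^\star \log(\Lambda/\varepsilon_n)$ within a $\log^2 n$ factor of $n^{p/(2\alpha+p)}$, which the $\log^{2\delta} n$ slack built into $\varepsilon_n$ absorbs under the assumption $\delta > 1$. Once \eqref{eq:entropy1} and \eqref{eq:prior1} are in hand, \eqref{conc} follows from Theorem 4 of Ghosal and van der Vaart (2007), using the standard equivalence, for Gaussian regression with uniformly bounded means, between the $\|\cdot\|_n$-neighborhoods in \eqref{concentration} and the Kullback--Leibler / Hellinger neighborhoods appearing in the original formulation.
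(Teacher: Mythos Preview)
Your proposal is correct and follows essentially the same route as the paper: take $\mF_n=\mF(L^\star,\bm p^\star,s^\star)$ so that \eqref{eq:remain1} is vacuous, bound the (local by global) covering number via the Lipschitz dependence of $f^{DL}_{\B}$ on $\B$, and use Lemma~\ref{lemma_approx} together with the same Lipschitz estimate to lower-bound the prior mass of an $\varepsilon_n$-ball. The only cosmetic difference is that the paper packages both the covering-number bound and the Lipschitz step by citing Lemma~10 of Schmidt-Hieber (2017) (with constant $V(L^\star+1)$, $V=\prod_{l}(p_l^\star+1)$), whereas you derive the analogous constant $\Lambda\le(12pN^\star)^{L^\star+1}$ directly; the resulting orders $s^\star\log(\Lambda/\varepsilon_n)\lesssim n^{p/(2\alpha+p)}\log^2 n\le n\varepsilon_n^2$ coincide.
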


\begin{proof}
Section \ref{sec:proof:thm1}
\end{proof}
\begin{remark}
Theorem \ref{thm1}  continues the line of theoretical investigation of Bayesian machine learning procedures. 
Lee (2000) obtained posterior consistency for single-layer sigmoidal networks. van der Pas and Rockova (2017) and Rockova and van der Pas (2017) obtained concentration results for Bayesian regression trees and forests.
\end{remark}

\subsection{Adaptation to Smoothness}
Theorem \ref{thm1} was conceived for network architectures that are optimally tuned for $\alpha$ that is fixed as if it were known. However, such oracle information is rarely available, rendering the result less relevant for practical design of networks. In this section, we devise a  hierarchical prior construction (by  endowing  the unknown network parameters with suitable priors), under which
the posterior performs  as well as  if we knew $\alpha$.

From the previous section (and discussion in Schmidt-Hieber (2017)), we know that  the number of layers $L$ can be chosen without the knowledge of smoothness $\alpha$. We will thus continue to assume that the number of layers  is  fixed and equal to $L^\star$ in \eqref{lstar}. 

Both the network width $N$ and sparsity level $s$ were chosen in an $\alpha$-dependent way. To obviate this constraint, we treat them as unknown with the following priors.
For  the network width multiplier $N$, we  deploy
\begin{equation}\label{prior:N}
\pi(N)=\frac{\lambda^N}{(\e^{\lambda}-1)N!}\quad\text{for}\quad N=1,2,\dots\quad\text{for some}\quad \lambda\in \R.
\end{equation}
The prior \eqref{prior:N} is one of the classical complexity priors used frequently in the Bayesian non-parametric literature (Coram and Lalley (2006), Liu et al. (2017), Rockova and van der Pas (2017)).
Similarly, the sparsity level $s$ will be now treated as unknown with the following prior
\begin{equation}\label{prior:s}
\pi(s)\propto  \e^{-\lambda_s s},\quad s=0,1,\dots, T.
\end{equation}

Denote with $\bm p_N^{L^\star}=(p, 12pN,\dots, 12pN,1)'\in\N^{L^\star}$ the now random vector of network widths that depend on $N$ and $L^\star$. Our parameter space now consists of shells of {\sl sparse} deep nets with different widths and sparsity levels, i.e.
 $$
 \mF(L^\star)=\bigcup_{N=1}^\infty\bigcup_{s=0}^{T} \mF(L^\star,\bm p_N^{L^\star},s),
 $$
where $T$ is the number of links in a fully connected network (defined in \eqref{eq:numpar}). We will design an approximating sieve as follows:
\begin{equation}\label{sieve}
\mF_n=\bigcup_{N=1}^{N_n}\bigcup_{s=0}^{s_n} \mF(L^\star,\bm p_N^{L^\star},s)
\end{equation}
for some suitable $N_n\in \N$ and $s_n\leq T$. Following  our discussion earlier in this section, the sieve $\mF_n$ should be rich enough to include networks that approximate well. To this end, we choose $N_n$ and $s_n$ similar to the ``optimal choices" obtained from the fixed $\alpha$ case, i.e.
\begin{equation}\label{Ns}
N_n=\lfloor \wt C_N n^{p/(2\alpha+p)}\log^{2\delta-1}(n)\rfloor\asymp n\varepsilon_n^2/\log n ,\quad \text{and}\quad s_n=\lfloor L^\star N_n\rfloor \asymp n\varepsilon_n^2
\end{equation}
for $\wt C_N>0$. With these choices, we show that the posterior distribution concentrates at the same rate as before, but without assuming $\alpha$.

\begin{theorem}\label{thm2}
Assume $f_0\in\Ha_p$, where $p=O(1)$ as $n\rightarrow\infty$, $\alpha<p$, and $\|f_0\|_\infty\leq F$. 
Let  $L^\star$ be as in \eqref{lstar} and assume priors \eqref{prior:s} and \eqref{prior:N}.
Then the posterior probability concentrates at the rate $\varepsilon_n=n^{-\alpha/(2\alpha+p)}\log^\delta(n)$ in the sense that
\begin{equation}
\Pi(f_{\B}^{DL}\in \mF(L^\star): \|f_{\B}^{DL}-f_0\|_n>M_n\,\varepsilon_n\C\Y^{(n)})\rightarrow 0
\end{equation}
in $\P_0^n$ probability as $n\rightarrow\infty$ for any  $M_n\rightarrow\infty$.
\end{theorem}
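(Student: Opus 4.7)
\medskip
\noindent\textbf{Proof sketch for Theorem \ref{thm2}.}

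The plan is to verify the three posterior concentration conditions \eqref{eq:entropy1}--\eqref{eq:remain1} for the full hierarchical prior, using the sieve $\mF_n$ defined in \eqref{sieve} with the choices of $N_n$ and $s_n$ in \eqref{Ns}. The architecture of the argument mirrors Theorem \ref{thm1}, but the three conditions must now be checked while integrating out $N$ and $s$ under the priors \eqref{prior:N} and \eqref{prior:s}. We assume throughout that $p=O(1)$, so that $T\lesssim (pN)^{L^\star+1}$ is polynomial in $N$ and $n$, and we carry over the uniform slab \eqref{ssdl} together with the architecture prior \eqref{prior:gamma} conditionally on $(N,s)$.

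For the entropy condition \eqref{eq:entropy1}, I would bound the covering number of $\mF_n$ by a union bound. For fixed $(N,s)$, the Schmidt-Hieber type covering number bound  for $s$-sparse, unit-bounded ReLU networks of depth $L^\star$ gives
$\log\mathcal{E}(\varepsilon;\mF(L^\star,\bm p_N^{L^\star},s);\|\cdot\|_\infty)\lesssim (s+1)\log\bigl(\varepsilon^{-1}(L^\star+1)(Np)^{2L^\star+2}\bigr).$
Taking a union over $N\leq N_n$ and $s\leq s_n$ adds only an additive $\log(N_n s_n)$ factor, so the log-entropy of $\mF_n$ at scale $\varepsilon\geq\varepsilon_n$ is of order $s_n\log n\lesssim n\varepsilon_n^2$, where the extra $\log n$ is absorbed by the $\log^\delta n$ already present in $\varepsilon_n$ provided $\delta>1$ as in Theorem \ref{thm1}.

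For the prior mass condition \eqref{eq:prior1}, I would plug in the oracle pair $(N^\star,s^\star)$ from Lemma \ref{lemma_approx} to obtain a deterministic network $\widehat f\in\mF(L^\star,\bm p_{N^\star}^{L^\star},s^\star)$ satisfying $\|\widehat f-f_0\|_\infty\lesssim\varepsilon_n$. The prior mass of an $\varepsilon_n$-neighborhood of $f_0$ decomposes as
\begin{equation*}
\Pi(A_{\varepsilon_n,1})\geq \pi(N^\star)\,\pi(s^\star)\,\pi(\bg^\star\mid s^\star)\,\Pi_{\bm\beta}\bigl(\|\bm\beta-\bm\beta^\star\|_\infty\leq c\varepsilon_n/s^\star\bigr),
\end{equation*}
where $\bg^\star,\bm\beta^\star$ correspond to $\widehat f$. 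Using Stirling, $\log\pi(N^\star)\asymp -N^\star\log N^\star\asymp -n\varepsilon_n^2$; the exponential prior gives $\log\pi(s^\star)\asymp -\lambda_s s^\star\asymp -n\varepsilon_n^2$; the combinatorial factor gives $-\log\binom{T}{s^\star}\gtrsim -s^\star\log T\gtrsim -n\varepsilon_n^2$; and the uniform slab contributes $s^\star\log(c\varepsilon_n/s^\star)\gtrsim -n\varepsilon_n^2\log n$, which again is of the right order thanks to the $\log^\delta n$ slack in $\varepsilon_n$. Summing yields the required lower bound $e^{-d\,n\varepsilon_n^2}$.

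Finally, for the remainder condition \eqref{eq:remain1}, the complement $\mF\setminus\mF_n$ is covered by $\{N>N_n\}\cup\{s>s_n\}$. A Stirling/Chernoff tail bound on the Poisson-type prior \eqref{prior:N} gives $\Pi(N>N_n)\lesssim \exp(-N_n\log N_n)$, and with $N_n\asymp n\varepsilon_n^2/\log n$ and $\wt C_N$ chosen large this is $o(e^{-(d+2)n\varepsilon_n^2})$; similarly $\Pi(s>s_n)\lesssim e^{-\lambda_s s_n}$ which is $o(e^{-(d+2)n\varepsilon_n^2})$ for $\lambda_s$ large enough. Combining \eqref{eq:entropy1}--\eqref{eq:remain1} via Theorem 4 of Ghosal and van der Vaart (2007) delivers the concentration statement.

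\medskip
\noindent The main obstacle I anticipate is the simultaneous balancing of the hyperparameters $\lambda,\lambda_s,\wt C_N$: the prior on $(N,s)$ must be diffuse enough to deposit $e^{-O(n\varepsilon_n^2)}$ mass on the $\alpha$-dependent oracle pair $(N^\star,s^\star)$ (which is unknown at prior-specification time), yet its tails must decay at a strictly faster exponential rate than $e^{-(d+2)n\varepsilon_n^2}$ to control the remainder. Because $N^\star$ and $s^\star$ themselves scale like $n\varepsilon_n^2$ up to logarithmic factors, the slack between the prior-mass lower bound and the remainder upper bound is only logarithmic in $n$, which is precisely what the $\log^\delta n$ factor in $\varepsilon_n$ with $\delta>1$ is there to buy. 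A secondary technical point is translating covering numbers in $\|\cdot\|_\infty$ on the union sieve into covering numbers in the empirical norm $\|\cdot\|_n$ that appears in \eqref{eq:entropy1}, which follows from $\|\cdot\|_n\leq\|\cdot\|_\infty$ on $[0,1]^p$.
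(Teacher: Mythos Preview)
Your proposal is correct and follows essentially the same route as the paper: verify \eqref{eq:entropy1}--\eqref{eq:remain1} on the sieve \eqref{sieve} with cutoffs \eqref{Ns}, using Schmidt-Hieber's covering bound for entropy, the Chernoff/exponential tails of \eqref{prior:N}--\eqref{prior:s} for the remainder, and the oracle pair $(N^\star,s^\star)$ together with the conditional prior-mass bound already established in Theorem~\ref{thm1} for the small-ball condition. Two minor technical refinements: (i) the Lipschitz constant translating an $\ell_\infty$ perturbation of the $s^\star$ nonzero coefficients into an $\|\cdot\|_\infty$ perturbation of the network is $V(L^\star+1)$ with $V=\prod_l(p_l^\star+1)$ (from Lemma~10 of Schmidt-Hieber), not $s^\star$, though the resulting log-mass is still $O(s^\star L^\star\log(pN^\star))\lesssim n\varepsilon_n^2$; (ii) the paper secures $\Pi(s>s_n)=o(\e^{-(d+2)n\varepsilon_n^2})$ by enlarging the sieve constant $\wt C_N$ (and hence $s_n$) rather than by enlarging the fixed prior hyperparameter $\lambda_s$, which avoids the circularity you flag between $d$ and $\lambda_s$.
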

\proof Section \ref{sec:proof:thm2}.

We conclude with the following key corollary that shows that deep ReLU networks with adaptive spike-and-slab priors {\sl do not overfit} in the sense that the posterior probability of using  more than the optimal number of nodes and links goes to zero as $n\rightarrow\infty$

\begin{corollary}\label{cor}
Let $N_n$ and $s_n$ be the optimal $\alpha$-dependent choices of $N$ and $s$ defined in \eqref{Ns}.
Under the assumptions in Theorem \ref{thm2} we have
\begin{equation}
\Pi(N>N_n\C\Y^{(n)})\rightarrow 0\quad\text{and}\quad \Pi(s>s_n\C\Y^{(n)})\rightarrow 0
\end{equation}
in $\P_0^n$ probability as $n\rightarrow\infty$.
\end{corollary}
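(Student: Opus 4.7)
The plan is to observe that Corollary \ref{cor} is essentially a free byproduct of the proof of Theorem \ref{thm2}. Indeed, by construction of the sieve in \eqref{sieve}, the events $\{N>N_n\}$ and $\{s>s_n\}$ are both contained in the sieve complement $\mF(L^\star)\setminus \mF_n$. It therefore suffices to show that
$$
\Pi\bigl(\mF(L^\star)\setminus \mF_n \,\big|\, \Y^{(n)}\bigr) \rightarrow 0
$$
in $\P_0^n$-probability. This in turn is a consequence of the standard Bayes-ratio argument used to verify the remaining-mass condition \eqref{eq:remain1} inside the proof of Theorem \ref{thm2}.

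More concretely, the approach is as follows. First, I would invoke the evidence lower bound used to handle the prior-mass condition \eqref{eq:prior1}: on an event $\mathcal{A}_n\subset \P_0^n$ of probability tending to one (obtained via Lemma 8.1 of Ghosal, Ghosh and van der Vaart (2000)), the denominator of the posterior satisfies
$$
\int \prod_{i=1}^n \frac{\Pi_f(Y_i \C \x_i)}{\Pi_{f_0}(Y_i\C \x_i)}\,\mathrm{d}\Pi(f) \;\geq\; \e^{-(d+1)n\varepsilon_n^2}.
$$
Second, the numerator restricted to $\mF(L^\star)\setminus \mF_n$ has expectation (under $\P_0^n$) equal to the prior mass $\Pi(\mF(L^\star)\setminus \mF_n)$, which the proof of Theorem \ref{thm2} already bounds by $o(\e^{-(d+2)n\varepsilon_n^2})$ thanks to the exponential decay of the priors \eqref{prior:N} and \eqref{prior:s} past the thresholds $N_n$ and $s_n$ in \eqref{Ns}. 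Combining the two via Markov's inequality gives
$$
\Pi\bigl(\mF(L^\star)\setminus \mF_n \,\big|\, \Y^{(n)}\bigr) \;=\; o_{\P_0^n}\!\bigl(\e^{-n\varepsilon_n^2}\bigr)\;\longrightarrow\; 0.
$$
Applying this separately to $\{N>N_n\}$ and $\{s>s_n\}$ yields both claims.

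There is essentially no hard step: the truly substantive work (the prior tail bound $\Pi(\mF(L^\star)\setminus \mF_n)=o(\e^{-(d+2)n\varepsilon_n^2})$ and the evidence lower bound) has already been done in proving Theorem \ref{thm2}. The only thing to check is that the thresholds $N_n$ and $s_n$ entering the sieve \eqref{sieve} coincide with the thresholds appearing in the corollary, which they do by design. The mild subtlety is merely bookkeeping: one should use that $\pi(N)$ and $\pi(s)$ each have exponential tails, so truncating at $N_n\asymp n\varepsilon_n^2/\log n$ and $s_n\asymp n\varepsilon_n^2$ produces a tail mass that is indeed exponentially small in $n\varepsilon_n^2$ with a constant strictly larger than $d+2$, as required.
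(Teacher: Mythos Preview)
Your proposal is correct and follows essentially the same route as the paper: the paper simply invokes Lemma~1 of Ghosal and van der Vaart (2007), noting that the required prior tail conditions $\Pi(N>N_n)=o(\e^{-(d+2)n\varepsilon_n^2})$ and $\Pi(s>s_n)=o(\e^{-(d+2)n\varepsilon_n^2})$ were already verified in the proof of Theorem~\ref{thm2}. What you have written is exactly the content of that lemma spelled out (evidence lower bound plus Markov's inequality on the numerator), so your argument and the paper's are the same up to the level of citation versus explicit mechanism.
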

\proof 
This statement follows from Lemma 1 of Ghosal and van der Vaart (2007) and holds upon the satisfaction of the conditions
$$
\Pi(N>N_n)=o(\e^{-(d+2)n\varepsilon_n^2})\quad\text{and}\quad \Pi(s>s_n)=o(\e^{-(d+2)n\varepsilon_n^2})
$$
that are verified in Section \ref{sec:proof:thm2}.

\section{Discussion}
Spike-and-Slab Deep Learning (SS-DL) with ReLU activation has been shown to be a fully Bayes deep learning architecture that can adapt to unknown smoothness.
It gives rise posteriors that concentrate around smooth functions at the near-minimax rate.
The key ingredients for this result are (a) sparsity through spike-and-slab regularization, (b) complexity priors on the network width and sparsity level. 
Spike-and-slab regularization provides a theoretically sound alternative to Dropout regularization.

In sum, there are many non-parametric methods that can achieve near-minimax recovery of H\"{o}lder smooth functions, but the appeal of deep learning is
their compositional structure, making them ideal for regression surfaces that are themselves compositions. Indeed, there is evidence that
deep learning has an exponential advantage over shallow networks for approximating compositions. Schmidt-Hieber (2017), for example, showed that sparsely connected deep ReLU networks achieve a near-minimax rate in learning for compositions of smooth functions. It is possible to adapt our techniques to obtain a Bayesian analogue of his compositional result.


\section{Proofs}
\subsection{Proof of Theorem \ref{thm1}}\label{sec:proof:thm1}
We proof the theorem by verifying Condition \eqref{eq:entropy1} and  \eqref{eq:prior1}, setting $\mF_n= \mF(L^\star,\bm p^\star,s^\star)$.
First, we need to verify the entropy condition and show that
\begin{equation}\label{eq:entropy}
\displaystyle \sup_{\varepsilon > \varepsilon_n} \log \mathcal{E}\left(\tfrac{\varepsilon}{36}, \{f_{\B}^{DL} \in \mF(L^\star,\bm p^\star,s^\star): \|f - f_0\|_n < \varepsilon\}, \|.\|_n\right) \leq n\,\varepsilon_n^2.
\end{equation}

We can upper-bound the local entropy \eqref{eq:entropy} with the global metric entropy.
In addition, 
$$
\{f_{\B}^{DL} \in \mF(L^\star,\bm p^\star,s^\star): \|f\|_\infty\leq \varepsilon\}\subset\{f_{\B}^{DL} \in \mF(L^\star,\bm p^\star,s^\star): \|f\|_n\leq \varepsilon\},
$$
provides an  upper-bound to \eqref{eq:entropy} with
\begin{align*}
&\log N\left(\tfrac{\varepsilon_n}{36}, f_{\B}^{DL} \in \mF(L^\star,\bm p^\star,s^\star), \|.\|_{\infty}\right)\leq 
(s^\star+1)\log\left(\frac{72}{\varepsilon_n} (L^\star+1)(12pN+1)^{2(L^\star+2)}\right)\\
&\quad\lesssim n^{p/(2\alpha+p)}\log(n)\log\left(n/\log^{\delta}(n)\right)\lesssim n^{p/(2\alpha+p)}\log^2(n)\lesssim n\varepsilon_n^2
\end{align*}
for $\delta>1$, where we have used Lemma 10 of Schmidt-Hieber (2017) and the fact that $s^\star \lesssim n^{p/(2\alpha+p)}$  and $N \asymp n^{p/(2\alpha+p)}/\log(n)$.
This verifies the entropy Condition \eqref{eq:entropy1}.

Next, we want to show that the prior concentrates enough mass around the truth in the sense that, for some $d>2$, 
\begin{equation}\label{eq:prior}
\displaystyle {\Pi(f_{\B}^{DL} \in  \mF(L^\star,\bm p^\star,s^\star):  \|f_{\B}^{DL} - f_0\|_n \leq  \varepsilon_n)}\geq \e^{-d\,n\,\varepsilon_n^2 }.
\end{equation} 
Choosing $N^\star=C_N\,\lfloor n^{p/(2\alpha+p)}/\log(n)\rfloor$ in Lemma \ref{lemma_approx},
 there exists a neural network $\wh f_{\wh\B}\in \mF(L^\star,\bm p^\star,s^\star)$ consisting of $\bm p^\star$ nodes aligned in $L^\star\lesssim \log(n)$ layers and
indexed by $\|\wh \B\|_0=s^\star\lesssim n^{p/(2\alpha+p)}\log(n)$
nonzero parameters
such that 
$$
\|\wh f_{\wh\B}-f_0\|_n\leq C_\infty n^{-\alpha/(2\alpha+p)}\log^{\delta\alpha/p}(n)\lesssim \varepsilon_n/2.
$$
The approximation $\wh f_{\wh\B}$ sits on a  network architecture characterized by a specific pattern $\wh\bg$ of nonzero links among $\wh\B$, i.e. $\wh W_l$ and $\wh a_l$ for $1\leq l\leq L+1$.  We denote by $ \mF(\wh\bg,L^\star,\bm p^\star,s^\star)\subset \mF(L^\star,\bm p^\star,s^\star)$ all the functions  supported on this particular architecture. These functions differ only in the size of the $s^\star$ nonzero coefficients among $\B$, denoted by $\b\in\R^{s^\star}$. With $\wh\b$, we denote the $s^\star$-vector  associated with the nonzero elements in $\wh\B$.

Notice that there are ${T\choose s^\star}\leq (12\,p\, N)^{(L^\star+1)\,s^\star}$ combinations to pick $s^\star$ the nonzero coefficients and each one, according to prior \eqref{prior:gamma},  has an equal prior probability of occurence $1/{T\choose s^\star}$.

To continue, we note (from the triangle inequality) that
$$
\{f_{\B}^{DL} \in \mF(L^\star,\bm p^\star,s^\star):\|f_{\B}^{DL}-f_0\|_n\leq \varepsilon_n\}\supset \{f_{\B}^{DL} \in \mF(\wh \bg):\|f_{\B}^{DL}-\wh{f}_{\wh\B}\|_\infty\leq \varepsilon_n/2\}.
$$
Next, we denote with $\{\b\in\R^{s^\star}:\|\b\|_\infty\leq 1\quad\text{and}\quad\|\b-\wh \b\|_\infty\leq \varepsilon_n\}$ the set of coefficients that are at most $\varepsilon$-away from the best approximating coefficients $\wh\b$ of the neural network $\wh f_{\wh\B}\in \mF(\wh\bg,L^\star,\bm p^\star,s^\star)$.
From the proof of Lemma 10 of Schmidt-Hieber (2017), it follows that 
\begin{align*}
&\left\{f_{\B}^{DL} \in \mF(\wh \bg):\|f_{\B}^{DL}-\wh{f}_{\wh\B}\|_\infty\leq \frac{\varepsilon_n}{2}\right\}\supset\\
&\quad\quad\quad\quad\quad\left\{\b\in\R^{s^\star}:\|\b\|_\infty\leq 1\,\,\text{and}\,\,\|\b-\wh \b\|_\infty\leq \frac{\varepsilon_n}{2V(L^\star+1)}\right\},
\end{align*}
where $V=\prod_{l=0}^{L^\star+1}(p_l^\star+1)$.
Now we have all the pieces  needed to  find a lower bound to the probability in \eqref{eq:prior}. We can write, for some suitably large $C>0$,
\begin{align*}
&{\Pi\left(f_{\B}^{DL} \in  \mF(L^\star,\bm p^\star,s^\star):  \|f_{\B}^{DL} - f_0\|_n \leq  \varepsilon_n\right)}>\frac{{\Pi(f_{\B}^{DL} \in  \mF(\wh\bg,L^\star,\bm p^\star,s^\star):  \|f_{\B} - \wh f_{\wh\B}\|_\infty \leq  \varepsilon_n /2)}}{{T\choose s^\star}}\\
 &> \e^{-(L^\star+1) s^\star\, \log (12\,p\,N^\star)} \Pi\left(\b\in\R^{s^\star}:\|\b\|_\infty\leq 1\,\,\text{and}\,\,\|\b-\wh \b\|_\infty\leq \frac{\varepsilon_n}{2V(L^\star+1)}\right).\label{lower}
 \end{align*}
To continue to lower-bound the expression above, we note that 
$$
\e^{-(L^\star+1) s^\star\, \log (12\,p\,N^\star)} >\e^{-C\log^2(n) n^{p/(2\alpha+p)}}
$$
for some $C>0$. Under the uniform prior distribution on a cube $[-1,1]^{s^\star}$ we can write
\begin{align*}
& \Pi\left(\b\in\R^{s^\star}:\|\b\|_\infty\leq 1\,\,\text{and}\,\,\|\b-\wh \b\|_\infty\leq \frac{\varepsilon_n}{2V(L^\star+1)}\right)=\left(\frac{\varepsilon_n}{2V(L^\star+1)}\right)^{s^\star}\\
&\geq \e^{-s^\star(L^\star+2)\log(12\, p\, n/\log^{\delta}(n))}\geq  \e^{-D\,n^{p/(2\alpha+p)}\log^2(n)}
\end{align*}
for some $D>0$. We can now combine this bound with the preceding expressions  to conclude that $e^{-(C+D)\,n^{p/(2\alpha+p)}\log^2(n)}\geq \e^{-d\,n\,\varepsilon_n^2}$  for $\delta>1$ and $d>C+D$. 
 This concludes the proof of \eqref{conc}.

\subsection{Proof of Theorem \ref{thm2}}\label{sec:proof:thm2}

First we show that the sieve  $\mF_n$ defined in \eqref{sieve} is still reasonably small in the sense that the log covering number can be  upper-bounded by a constant multiple of $n^{p/(2\alpha+p)}\log^{2\delta}(n)$.
It follows from the proof of Theorem \ref{thm1} that the global metric entropy satisfies
\begin{align*}
 \mathcal{E}\left(\tfrac{\varepsilon_n}{36}, \mF_n, \|.\|_n\right) &\leq \sum_{N=1}^{N_n}\sum_{s=0}^{s_n}
\e^{(s+1)\log\left(\frac{72}{\varepsilon_n} (L^\star+2)(12pN+1)^{2(L^\star+2)}\right)}\\
&\lesssim N_n\,s_n\,\e^{C\,(L^\star+2)(s_n+1)\log (pN_nL^\star/\varepsilon_n)}
\end{align*}
 for  some $C>0$ and thereby
$$
\log \mathcal{E}\left(\tfrac{\varepsilon_n}{36}, \mF_n, \|.\|_n\right)\lesssim \log N_n+\log s_n+ n\,\varepsilon_n^2\lesssim n\,\varepsilon_n^2.
$$
This verifies Condition \eqref{eq:entropy1}.

Next, we need to show that the prior charges the sieve in the sense that
$
\Pi[\mF_n^c]=o(\e^{(d+2)n\varepsilon_n^2})$ for some $d>2$ (determined below). We have
$$
\Pi[\mF_n^c]<\Pi(N>N_n)+\Pi(s>s_n).
$$
We apply the Chernoff bound to find that
\begin{equation}\label{chernoff}
\Pi(N>N_n)<\e^{-t\,(N_n+1)}\mathbb{E}\, \e^{t\,N}\propto  \e^{-t\,(N_n+1)}\left(\e^{\e^t\lambda}-1\right)
\end{equation}
for any $t>0$. With our choice $N_n=\lfloor \wt C_Nn^{p/(2\alpha+p)}\log^{2\delta-1}n\rfloor$  and  with $t=\log N_n$ we obtain
$$
\Pi(N>N_n)\e^{(d+2)\,n\varepsilon_n^2}\lesssim \e^{-(N_n+1)\log N_n+ \lambda N_n+(d+2)\,n\varepsilon_n^2}\rightarrow 0
$$
for a large enough constant $\wt C_N$.
Next, we find that
$$
\Pi(s>s_n)\e^{(d+2)\,n\varepsilon_n^2}\lesssim\e^{-C_s(\lfloor L^\star N_n\rfloor+1)+(d+2)\,n\varepsilon_n^2}\rightarrow 0
$$
for some suitably large $\wt C_N>0$. This verifies Condition \eqref{eq:remain1}.

Finally, we verify the prior concentration Condition \eqref{eq:prior1}.
For $N^\star< N_n$ and  $s^\star<s_n$ we know from the proof of Theorem \ref{thm1} that 
$$
\displaystyle {\Pi(f_{\B}^{DL} \in  \mF(L^\star,\bm p^\star,s^\star):  \|f_{\B}^{DL} - f_0\|_n \leq  \varepsilon_n)}\geq \e^{-D_1\, n\,\varepsilon_n^2}
$$
for some $D_1>2$.  Our priors put enough mass at the ``right choices" $(N^\star,s^\star)$ in the sense that $\pi(N^\star)\gtrsim \e^{-N_n\log (N_n/\lambda)}\gtrsim \e^{-D\, n\varepsilon_n^2}$ and  $\pi(s^\star)\gtrsim\e^{-D\,n\varepsilon_n^2}$ for some suitable $D>0$. Then we can write 
\begin{align*}
&{\Pi(f_{\B}^{DL} \in  \mF_n:  \|f_{\B}^{DL} - f_0\|_n \leq  \varepsilon_n)}\\
&\quad\quad\geq \pi(N^\star)\pi(s^\star) {\Pi(f_{\B}^{DL} \in  \mF(L^\star,\bm p^\star,s^\star):  \|f_{\B}^{DL} - f_0\|_n \leq  \varepsilon_n)}
\geq \e^{-(2D+D_1)\, n\varepsilon_n^2}.
\end{align*} 
With these considerations, we conclude the proof of Theorem \ref{thm2}.

\section{References}

\medskip




\noindent Bauer, B. and Kohler, M. (2017). On Deep Learning as a remedy for the curse of dimensionality in nonparametric regression. \emph{Technical report}.\medskip

\noindent Castillo, I.  and van der Vaart (2012). Needles and straw in a haystack: Posterior concentration for possibly sparse sequences.  \emph{Annals of Statistics}, 40, 2069-2101.\medskip

\noindent Cheang, G. H. (2010). Approximation with neural networks activated by ramp sigmoids.  \emph{Journal of Approximation Theory}, 162, 1450-1465.\medskip

\noindent Cheang, G. H., and Barron, A. R. (2000). A better approximation for balls.  \emph{Journal of Approximation Theory}, 104, 183-203.\medskip

\noindent Coram, M. and Lalley, S. (2010). Consistency of Bayes estimators of a binary regression function.  \emph{Annals of Statistics}, 34, 1233-1269.\medskip

\noindent Dinh, R., Pascanu, R., Bengio, S. and Bengio, Y. (2017). Sharp Minima Can Generalize For Deep Nets.  \emph{arXiv:1703.04933}.\medskip


\noindent George, E.I. and McCulloch, R. (1993). Variable selection via Gibbs sampling. \emph{Journal of the American Statistical Association}, 88, 881-889.\medskip

\noindent Ghosal, S., Ghosh, J. and van der Vaart, A. (2000). Convergence rates of posterior distributions. \emph{Annals of Statistics}, 28, 500-531.\medskip

\noindent Ghosal, S. and van der Vaart, A. (2007). Convergence rates of posterior distributions for noniid observations. \emph{Annals of Statistics}, 35, 192-223.\medskip

\noindent Ghosh, S. and Doshi-Velez, F.  (2017).  Model selection in Bayesian neural networks via horseshoe priors. \emph{Advances in Neural Information Processing Systems}.\medskip

\noindent Glorot, X., Border, A. and Bengio, Y.  (2011).  Deep sparse rectifier neural networks. \emph{Proceedings of the 14th International Conference on Artificial Intelligence and Statistics}.\medskip

\noindent Goodfellow, I., Bengio, Y. and Courville, A.  (2016).  Deep Learning. \emph{MIT Press}.\medskip


\noindent Ismailov, V. (2017). Approximation by sums of ridge functions with fixed directions.  \emph{St. Petersburg Mathematical Journal}, 28, 741-772.\medskip

\noindent Kainen, P. C., K{\r u}rkov\'{a}, V., and Vogt, A. (2003). Best approximation by linear combinations of characteristic functions of half-spaces.  \emph{Journal of Approximation Theory}, 122, 151-159.\medskip

\noindent Kainen, P. C., K\r{r}kov\'{a}, V., and Vogt, A. (2007). A Sobolev-type upper bound for rates of approximation by linear combinations of Heaviside plane waves.  \emph{Journal of Approximation Theory}, 147, 1-10.\medskip

\noindent Kawaguchi, K.,  Kaelbling, L. P. and  Bengio, Y.  (2017). Generalization in Deep Learning. \emph{arXiv:1710.05468}.\medskip

\noindent Klusowki, J.M. and Barron, A.R. (2016). Risk bounds for high-dimensional ridge function combinations including neural networks. \emph{arXiv:1607.01434}.\medskip

\noindent Klusowki, J.M. and Barron, A.R. (2017). Minimax lower bounds for ridge combinations including neural networks. \emph{arXiv:1702.02828}.\medskip 

\noindent Kolmogorov, A. (1963). On the representation of continuous functions of many variables by superposition of continuous functions of one variable and addition. {\emph American Mathematical Society Translation}, 28, 55-59.\medskip



\noindent K\r{r}kov\'{a}, V., Kainen, P. C., and Kreinovich, V. (1997). Estimates of the number of hidden units and variation with respect to half-spaces.  \emph{Neural Networks}, 10, 1061-1068.\medskip

\noindent Le Cam, L. (1973). Convergence of estimates under dimensionality restrictions. \emph{Annals of Statistics}, 1, 38-53.\medskip

\noindent Lee, H. (2000). Consistency of posterior distributions for neural networks. \emph{Neural Networks}, 13, 629-642.\medskip


\noindent Liu, L., Li, D. and  Wong, W.H. (2017). Convergence rates of a partition based Bayesian multivariate density estimation methods. \emph{Advances in Neural Information Processing Systems}, 30.\medskip

\noindent Mhaskar, H. N. (1996). Neural networks for optimal approximation of smooth and analytic functions.  \emph{Neural Computation}, 8(1), 164-177.\medskip

\noindent Mhaskar, H., Liao, Q., and Poggio, T. A. (2017). When and why are deep networks better than shallow ones? In  \emph{AAAI}, 2343-2349.\medskip


\noindent Montufar, G.F., R. Pascanu, K. Cho and Y. Bengio (2014). On the number of linear regions of deep neural networks. \emph{Advances in Neural Information Processing Systems}, 27, 2924-2932.\medskip

\noindent   van der Pas, S. and Rockova, V. (2017). Bayesian dyadic trees and histograms for regression. \emph{Advances in Neural Information Processing Systems}.\medskip

\noindent Petersen, P. and F. Voigtlaender (2017). Optimal approximation of piecewise smooth functions using deep ReLU neural networks. \emph{arXiv:1709.05289}.\medskip

\noindent Petrushev, P. P. (1999). Approximation by ridge functions and neural networks. \emph{SIAM J. Math Anal.}, 30, 155-189.\medskip

\noindent Pinkus, A. (1999). Approximation theory of the MLP model is neural networks. \emph{Acta Numerica}, 143-195.\medskip

\noindent Poggio, T., Mhaskar, H., Rosasco, L., Miranda, B., and Liao, Q. (2017). Why and when can deep-but not shallow-networks avoid the curse of dimensionality: A review.  
\emph{International Journal of Automation and Computing}, 14, 503-519.\medskip

\noindent Polson, N.  and Sokolov, V. (2017). Deep Learning: a Bayesian perspective.  
\emph{Bayesian Analysis}, 12, 1275-1304.\medskip

\noindent Rockova, V. and van der Pas, S. (2017). Posterior Concentration for Bayesian Regression Trees and their Ensembles. \emph{arXiv:1708.08734}.\medskip



\noindent Schmidt-Hieber, J. (2017). Nonparametric regression using deep neural networks with ReLU activation function. \emph{arXiv:1708.06633}.\medskip

\noindent Shen, X. and Wasserman, L. (2001). Rates of convergence of posterior distributions. \emph{Annals of Statistics}, 29, 687-714.\medskip

\noindent Srivastava, N., Hinton, G., Krizhevsky, A.,Sutskever, I. and Salakhutdinov, R. (2015). Dropout: a simple way to prevent neural networks from overfitting. \emph{Journal of Machine Learning Research}, 15, 1929-1958.\medskip

\noindent Telgarsky, M. (2016). Benefits of depth in neural networks. \emph{JMLR: Workshop and Conference Proceedings}, 49,1-23.\medskip

\noindent Telgarsky, M. (2017). Neural Networks and Rational functions. \emph{arXiv:1706.03301}.\medskip

\noindent Vitushkin, A. G. (1964). Proof of the existence of analytic functions of several complex variables which are not representable by linear superpositions of continuously differentiable 
functions of fewer variables.  \emph{Soviet Mathematics}, 5, 793-796.\medskip

\noindent Walker, S.,  Lijoi, A. and  Prunster, I. (2007). On rates of Convergence of Posterior Distributions in Infinite Dimensional Models. \emph{Annals of Statistics}, 35, 738-746.\medskip

\noindent Wager, S., Wang, S.  and Liang, P. (2014). Dropout training as adaptive regularization. \emph{Advances in Neural Information Processing Systems}.\medskip

\noindent Wong, W. H. and X. Shen (1995). Probability inequalities for Likelihood ratios and convergence rates of sieve mles. \emph{Annals of Statistics}, 23, 339-362.\medskip

\noindent Yarotsky, D. (2017). Error bounds for approximations with deep ReLU networks.  \emph{Neural Networks}, 94, 103-114.\medskip

\end{document}